\newtheorem{lemma}{Lemma}
\theoremstyle{definition}
\newtheorem{definition}{Definition}[section]
\def\BibTeX{{\rm B\kern-.05em{\sc i\kern-.025em b}\kern-.08em
    T\kern-.1667em\lower.7ex\hbox{E}\kern-.125emX}}
\begin{document}

\title{Boosting Time Series Prediction of Extreme Events by Reweighting and Fine-tuning}

% \author{\IEEEauthorblockN{1\textsuperscript{st} Jimeng Shi}
% \IEEEauthorblockA{\textit{Florida International University} \\
% \textit{name of organization (of Aff.)}\\
% jshi008@fiu.edu}
\author{\IEEEauthorblockN{Jimeng Shi}
\IEEEauthorblockA{
\emph{Florida International University} \\
jshi008@fiu.edu}
\and
\IEEEauthorblockN{Azam Shirali}
\IEEEauthorblockA{
\emph{Florida International University} \\
ashir018@fiu.edu}
\and
\IEEEauthorblockN{Giri Narasimhan}
\IEEEauthorblockA{
\emph{Florida International University} \\
giri@fiu.edu}
}

\maketitle

\begin{abstract}
Extreme events are of great importance since they often represent impactive occurrences. For instance, in terms of climate and weather, extreme events might be major storms, floods, extreme heat or cold waves, and more.
However, they are often located at the tail of the data distribution. 
Consequently, accurately predicting these extreme events is challenging due to their rarity and irregularity.
Prior studies have also referred to this as the \emph{out-of-distribution} (OOD) problem, which occurs when the distribution of the test data is substantially different from that used for training.
In this work, we propose two strategies, \emph{reweighting} and \emph{fine-tuning}, to tackle the challenge. 
Reweighting is a strategy used to force machine learning models to focus on extreme events, which is achieved by a weighted loss function that assigns greater penalties to the prediction errors for the extreme samples relative to those on the remainder of the data.
Unlike previous intuitive reweighting methods based on simple heuristics of data distribution, we employ meta-learning to dynamically optimize these penalty weights.
To further boost the performance on extreme samples, we start from the reweighted models and fine-tune them using only rare extreme samples. 
Through extensive experiments on multiple data sets, we empirically validate that our meta-learning-based reweighting outperforms existing heuristic ones, and the fine-tuning strategy can further increase the model performance.
More importantly, these two strategies are model-agnostic, which can be implemented on any type of neural network for time series forecasting. 
The open-sourced code is available at \emph{\url{https://github.com/JimengShi/ReFine}}.
\end{abstract}

\begin{IEEEkeywords}
Time Series Prediction, Out-of-Distribution, Extreme Events, Reweighting, Fine-tuning
\end{IEEEkeywords}

\section{Introduction}
\label{sec:intro}
Recently, deep learning (DL) has achieved unprecedented success in a variety of diverse applications \cite{krizhevsky2012imagenet}.
This success relies heavily on the availability of rich and high-quality datasets, i.e., large-scale datasets with a balanced distribution. 
In practice, most real-world datasets are imbalanced, necessitating a careful treatment of minority samples \cite{li2021autobalance}.
In time series, occurrences of extreme highs or lows are sparingly infrequent, leading to the emergence of long-tailed data distributions (e.g., extreme precipitation and extreme heat events).

\begin{figure}[ht]
\centering
    \begin{subfigure}[b]{0.23\textwidth}
        \includegraphics[width=\textwidth]{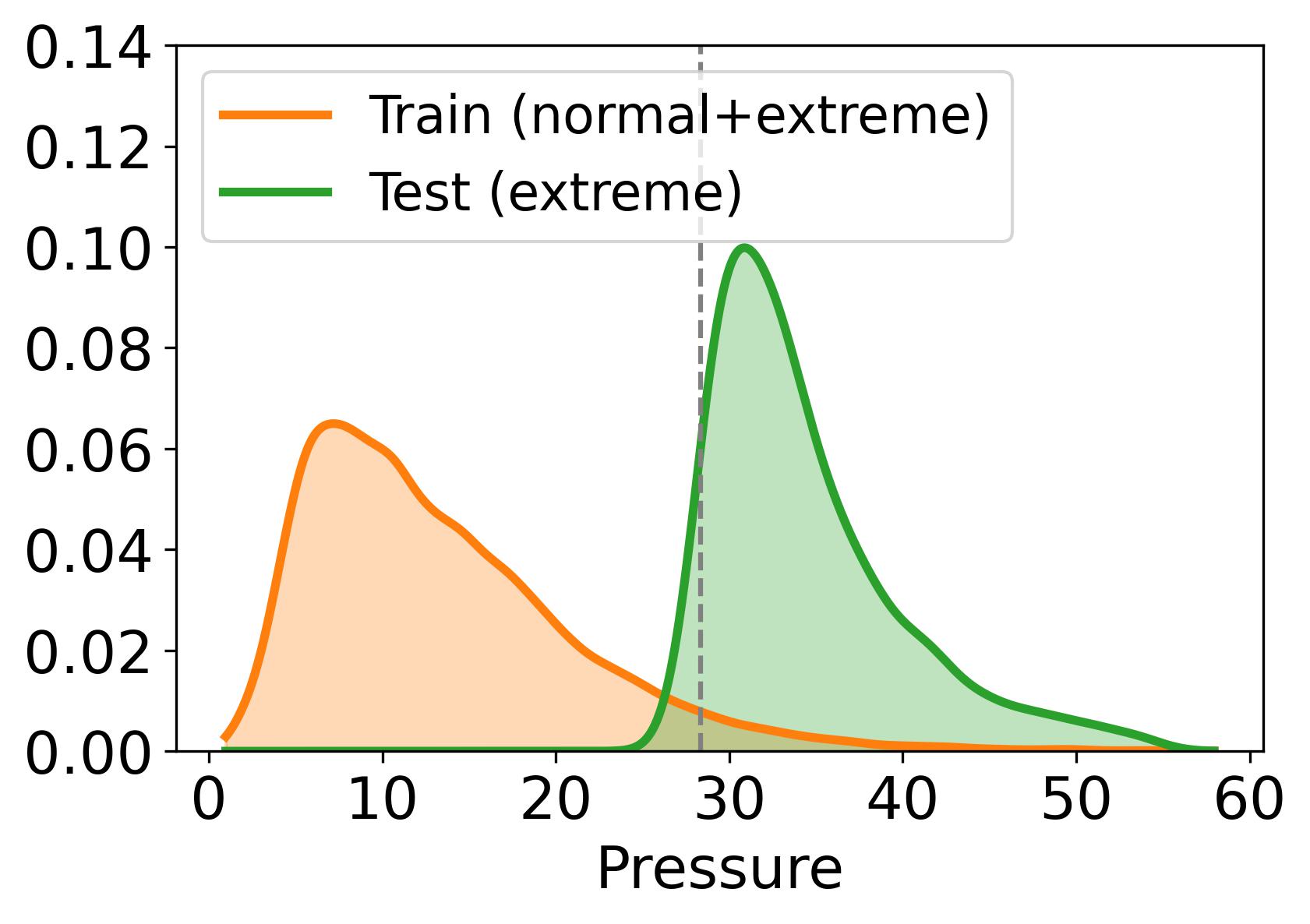}
    \label{fig:pdf_pressure}
    \end{subfigure}
    % \quad   % Optional: change space by \quad or \hfill
    \begin{subfigure}[b]{0.24\textwidth}
        \includegraphics[width=\textwidth]{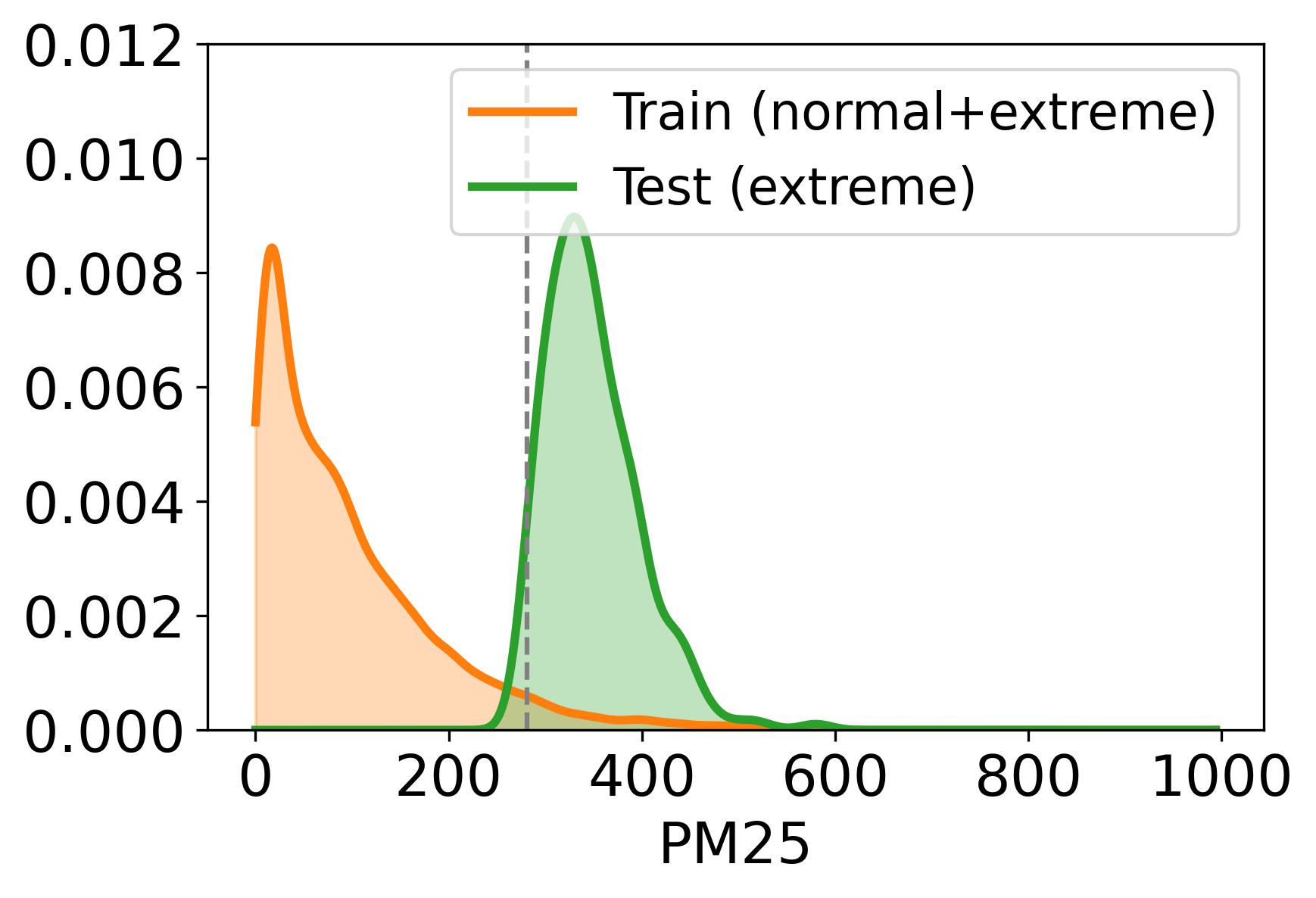}
    \label{fig:pdf_pm25}
    \end{subfigure}
\vspace{-2mm}
\caption{Out-of-Distribution (OOD) problem showing the disparity between the training and test sets. The gray dashed line represents the threshold (95$^{th}$ percentile) to separate normal and extreme samples.}
\label{fig:ood}
\end{figure}

Training with long-tailed datasets can bias against and also hide poor performance on the minority samples.
Most of the current DL models for time series prediction may perform poorly on extremes either during training - \textit{underfitting}, or during testing - \textit{overfitting} due to their rarity and irregularity.
Underfitting arises because DL models may lack sufficient exposure to minority knowledge during training, while overfitting occurs due to the out-of-distribution (OOD) problem, i.e., the disparity in the distributions between the training and test sets (see Figure \ref{fig:ood}). 
In some scenarios, training sets are often heavily skewed with an overwhelming majority of normal samples and a small minority of extreme samples, while the evaluation set may exclusively comprise rare extreme samples since they are of great interest in real datasets.
Theoretical analysis has shown that strong generalizations for OOD data cannot be truly achieved \cite{berend2020cats,lucas2020theoretical}, making it critical to seek effective methods to alleviate the problem.

A conventional approach to improving the performance on extreme events is \textbf{reweighting}. 
The key idea is to offset the imbalance in the data distribution by differentially weighting the prediction errors of normal and extreme samples in the loss function during training.
Such weighted loss functions allow predictive models to reduce the errors on extreme examples while preventing the abundance of normal samples from biasing the predictor.
Reweighting could be achieved by using \emph{heuristic} methods \cite{patterson2013parasite,ding2019modeling} based on prior knowledge of data imbalance or via \emph{meta-learning} \cite{chen2023meta,zhou2022model}.

\emph{Heuristic} reweighting may be achieved by using graded weights, with higher values for the errors in the minority and lower ones for the errors in the majority, or by weighting the groups inversely to the group size \cite{patterson2013parasite}.
Another approach is to model the Extreme Value distribution \cite{kozerawski2022taming,ding2019modeling}, assigning weights inversely to the probability of the data. 
The above methods assign weights based on some heuristics that require prior knowledge of data distributions, which are not always readily available.
In this paper, we implement and compare the existing two ways for reweighting. 
 
\emph{Meta-learning} offers an alternative approach for implementing reweighting, distinct from heuristic methods.
Instead of relying on prior knowledge of data distributions, meta-learning endeavors to learn penalty weights within the learning algorithm autonomously — learning to learn \cite{hospedales2021meta, huisman2021survey}.
A representative work determines the penalty weights by calculating the similarity between training and test samples \cite{chen2023meta}. While their learning strategy is dynamic, it is computationally expensive to compute the similarity for all test samples.
In our work, we achieve meta-learning-based reweighting with the help of a clean and unbiased evaluation set comprising solely extreme samples.
More specifically, we cast the reweighting task as a bilevel optimization problem \cite{franceschi2018bilevel}. In the inner loop, deep learning models are trained on weighted training samples. Meanwhile, in the outer loop, we minimize prediction errors on the preceding evaluation set to guide the learning of the best penalty weights.

\textbf{Fine-tuning} is a substantially different technique that takes existing models and boosts their performance on targeted tasks.
For example, many researchers expand the capabilities of pre-trained large models (LLMs) for specific applications and also achieve robust and stable performance \cite{houlsby2019parameter, zhang2023instruction}. 
Inspired by that, we hypothesize that our initially trained models, which perform well on massive normal events, can be subsequently fine-tuned to get a secondary model that focuses on performing well on rare extreme events. Such a two-phase solution is similar to the ones to address ``domain'' adaption tasks -- using models trained in one domain where there is enough annotated training data in another where there is little or none \cite{farahani2021brief, singhal2023domain}.

Our main contributions are summarized as follows:
\begin{itemize}
    \item To better model imbalanced data with rare extreme events during training, we apply two heuristic methods and adapt a meta-learning-based method to compute the penalty weights in the loss function to balance the bias created by normal data (majority) and to boost the learning from extreme (minority) samples.
    % \begin{itemize}
    %      \item apply weights inversely proportional to group size.
    %      \item apply weights inversely proportional to the probability of the data subgroup, as suggested by Extreme Value Theory (EVT).
    %      \item apply meta-learning-based reweighting to automatically learn optimal weights while avoiding the reliance on prior knowledge of data distributions.
    % \end{itemize}
    \item To further boost the model performance of time series prediction under extreme events, after reweighting, we subsequently incorporated a fine-tuning technique to adapt the previously trained model for extreme domain adaptation.
    \item We conduct extensive experiments across 4 datasets, which indicates the \emph{reweighting} and \emph{fine-tuning} methods can consistently outperform the previous benchmarks.
\end{itemize}
\section{Related work}
\label{sec:related_work}

\subsection{Time Series Prediction}
Traditional time series prediction employs linear methods like autoregressive moving averages \cite{said1984testing} or nonlinear approaches like NARX \cite{lin1996learning}. 
Nevertheless, the effectiveness of such methods is constrained due to their shallow architectures and low generalizability.
Over the past decades, deep learning has achieved significant success in various domains \cite{krizhevsky2012imagenet, stebliankin2023evaluating, chen2023tsmixer}. 
Representative work on time series prediction includes multilayer perceptron (MLP) \cite{chen2023tsmixer}, convolutional neural networks (CNNs) \cite{yang2015deep}, recurrent neural networks (RNNs) \cite{cho2014learning}, long-short term memory (LSTM) networks \cite{graves2012long}, graph neural networks (GNNs) \cite{wu2020connecting}, and well-designed transformer-based models \cite{zhou2021informer, zeng2023transformers, wu2021autoformer, nie2022time}. 
Despite their success, none of these models directly address the specific challenge of time series prediction for rare but vital extreme events, causing the distribution disparity between the training and test sets.

\subsection{Reweighting}
A potential solution to alleviate the poor performance of minority extreme samples is to differentially weight the prediction errors arising from the training samples.
For instance, simply assigning higher weights to the prediction errors of all minority samples and lower weights to those of all majority ones, intuitively up-weighting the rare group inversely to its group size \cite{patterson2013parasite}, or utilizing Extreme Value Theory (EVT) to up-weight the rare group inversely to the probability of long-tailed data \cite{kozerawski2022taming, zhang2021enhancing}. 
Zhang et al. \cite{zhang2021data} proposed a framework to integrate ML models with anomaly detection algorithms to filter extreme events and use percentile values as the weights. 
Li et al. \cite{li2023extreme} proposed, NEC+, learns extreme and normal predictions separately and assigns a corresponding probability as the weight for extreme and normal classes. Another work from them separates extreme and normal samples based on the distance to the mean value \cite{li2024learning}.
However, these existing methods compute the assigned weights using prior knowledge of the data distribution and they cannot assign weights adaptively.
On the other hand, Chen et al. \cite{chen2023meta} determine the weights based on the similarity between training and test samples, but the choice of similarity functions is user-defined and not automated.
Furthermore, for the prediction in a long time series, the testing phase is computationally expensive as each new test data requires a separate process to update the penalty weights in the loss function.

\subsection{Fine-tuning}
With the advent of the large foundational model era, fine-tuning has emerged as a valuable technique to refocus the models to address additional specific tasks and to achieve robust and stable performance \cite{houlsby2019parameter, zhang2023instruction}. 
Foundational models are built by initially training a model on an extensive dataset to learn the comprehensive foundational knowledge and achieve a baseline performance on standard tasks.
Subsequently, the trained models undergo fine-tuning to tailor them to specific tasks, which involves utilizing a limited number of exclusive samples \cite{lester2021power, azad2023foundational}. 
Inspired by the success of fine-tuning, we propose a similar approach for generalizing foundational models to perform well on rare extreme events. 
In our work, foundational models are trained with the entire training set consisting of both normal and extreme events; fine-tuning is exclusively done with only rare extreme events. 
To the best of our knowledge, we have not seen work that employs \emph{fine-tuning} in the context of extreme event prediction.
\section{Problem Formulation}
\label{sec:problem_form}

For a given time series, let $\mathbf{Z}_t$ be an observation at time $t$. For generality, if $d$ different observations are collected at each time point, we assume that $\mathbf{Z}_t=\{z_1(t), \ldots, z_d(t)\} \in \mathbf{R}^d$ is a vector of dimension $d$.
In general, the ``target'' variable(s) to be predicted, $\mathbf{Z}_{t+\Delta{t}} \in \mathbf{R}^{d^{*}} (d^{*} \leq d)$, is selected from one or more of the dimensions in the observation vector.

\begin{definition}[\textbf{Time Series Prediction}]
Given a sequence of $\alpha$ time points from the past (called ``look-back window'') to predict the target variable(s) for $\beta$ time points in the future (called ``prediction window''). It can be described as:
$$[\mathbf{Z}_{t-(\alpha-1)}, \ldots, \mathbf{Z}_{t}] \xrightarrow[]{\mathcal{F(\cdot)}} [\mathbf{Z}_{t+1}, \ldots, \mathbf{
Z}_{t+\beta}].$$
\end{definition}

\begin{definition}[\textbf{Extreme Events}]
Extreme events occur when one or more observation values within a window (either look-back or prediction) cross a specific threshold, $\xi$.
In our work, we choose that threshold to be the $95^{th}$ percentile value within some set of observations. 
\end{definition}

\begin{definition}[\textbf{Long-tailed Distributions}]
Long-tailed data distributions are characterized by dominant samples with rare samples in the tail of the distribution. 
\end{definition}
Extreme values in many time series occupy the long-tailed zone.
When the extreme samples are a small part of the data, but with enormous impact, then the resulting data imbalance needs to be addressed in the models.
We refer to the data samples with (without, resp.) extreme events as extreme (normal, resp.) samples. 
Let $(x, y)$ be a input-target pair where $x \in \mathbf{R}^{\alpha \times d}$ and $y \in \mathbf{R}^{\beta \times d^{*}}$ refer to input and output time series. We have $\mathcal{D}:=\{(x_i, y_i)\}_{i=1}^{N}$ be the training set that includes pairs of both extreme samples $\mathcal{D}_{extre}:=\{(x_i, y_i)\}_{i=1}^{P}$ and normal samples $\mathcal{D}_{norm}:=\{(x_i, y_i)\}_{i=1}^{Q}$, where $P \ll Q < N$. 
The imbalanced training set causes the long-tailed distribution (Figure \ref{fig:form_a}).
We assume that there is a small clean and unbiased evaluation set, $\mathcal{D}_{extre}^{e}:=\{(x_i, y_i)\}_{i=1}^{M}$, where $M \ll N$
(Figure \ref{fig:form_b}). 
Hereafter, we will use superscript $e$ to denote the evaluation set and subscript $i$ to denote the $i^{th}$ data.
Our task is to train a DL model that can generalize well on the rare extreme samples in the evaluation set, without compromising performance on normal samples. 
We reiterate that our skewed training sets have a majority of normal samples and a minority of extreme samples and that we set aside an evaluation set with \textbf{ONLY} extreme samples.
\begin{figure}[ht!]
\centering
    \begin{subfigure}[b]{0.21\textwidth} 
        \includegraphics[width=\textwidth]{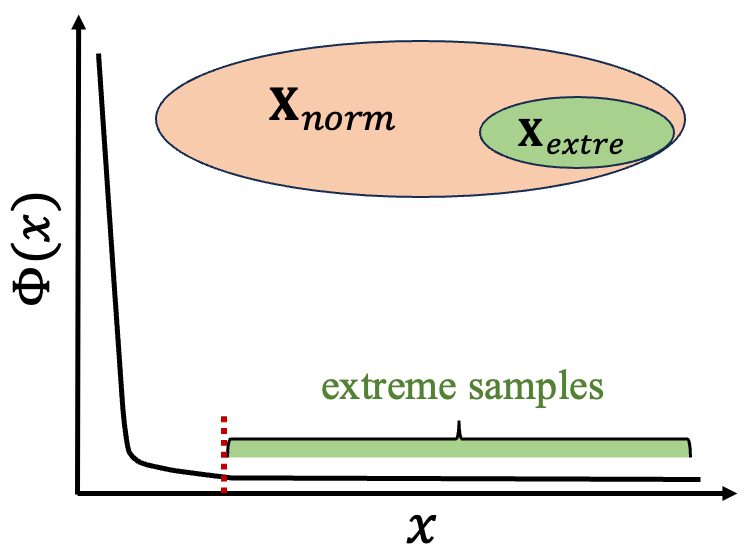}
        \caption{Training set.}
        \label{fig:form_a}
    \end{subfigure}
    \quad   % Optional: space added by \quad or \hfill
    \begin{subfigure}[b]{0.21\textwidth} 
        \includegraphics[width=\textwidth]{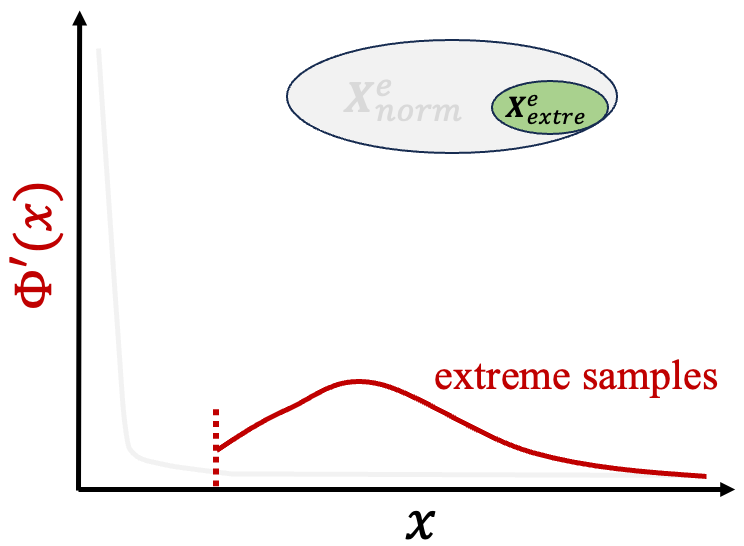}
        \caption{Evaluation set.}
        \label{fig:form_b}
    \end{subfigure}
\caption{Illustration of data distribution. $\Phi$ and $\Phi'$ are the probability distribution functions of the training and evaluation set. The dashed line refers to the threshold to split extreme and normal samples. The oval sizes represent the set sizes.}
% \Description{}
\label{fig:formulation}
\end{figure}

\begin{figure*}[ht]
\centering
\includegraphics[width=1.83\columnwidth]{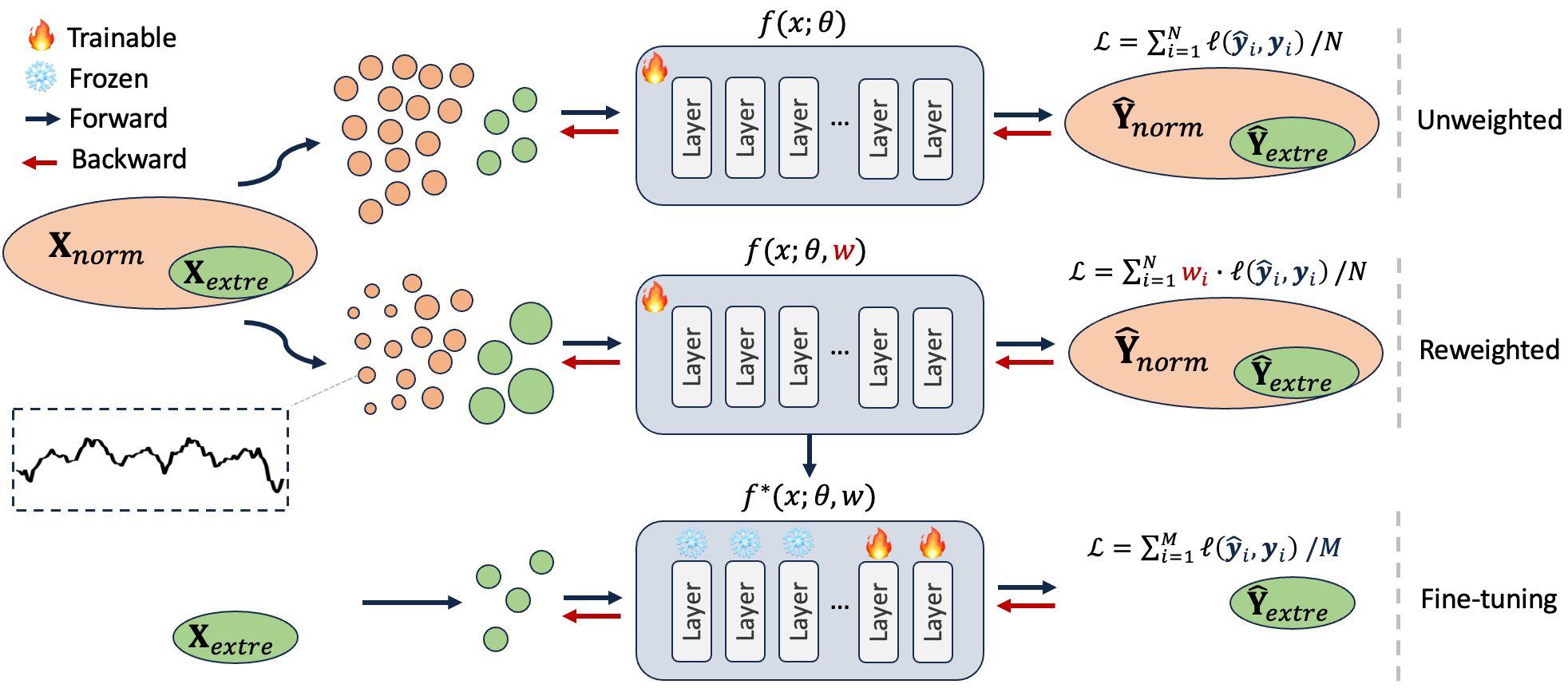} 
\caption{Training process of the unweighted framework, the reweighting approach, and the fine-tuning method. The ovals represent the sample spaces;
the small circles represent individual inputs, while their sizes denote their weights; $\textbf{y}_i$ and $\hat{\textbf{y}}_i$ are the ground truth and prediction values, respectively. Trainable models are marked with the ``fire'' symbol in the upper left corner; individual layers are marked as trainable or frozen during fine-tuning.}

\label{fig:method}
\end{figure*}

\section{Methodology}
\label{sec:method}
We formulate our approach with two steps. 
First, given both normal and extreme samples, we employ a \emph{reweighting} strategy to encourage the models to focus training on the minority extreme samples and prevent the vast number of normal samples from biasing the predictor. 
In the second step, we utilize a \emph{fine-tuning} strategy to \textbf{further} adapt the models to these minority samples by retraining them exclusively on \textbf{ONLY} rare extreme events.
See Figure \ref{fig:method} for details.
In what follows, we will discuss the three methods for the design of the penalty weights used in the loss function.

\subsection{Reweighting}
\label{sec:reweight}
The traditional training manner attempts to minimize the expected loss over a data set of size $N$: $\mathcal{L} = \frac{1}{N} \sum_{i=1}^N \ell(\hat{y}_i, y_i)$, where ${y_i}$ and $\hat{y}_{i}=f(x_i; \theta)$ are the ground-truth and the predicted values and $\ell(\hat{y}_i, y_i)$ measures the prediction errors. The function $f(x_i; \theta)$ is the predictive model with parameters $\theta$.
The loss function mentioned above equally weights the prediction errors for all samples.
In reweighting, the prediction errors are weighted differentially to emphasize those on specific subsets of the training data. The weighted loss function is $\mathcal{L}(\theta, w) = \frac{1}{N} \sum_{i=1}^N w_i \cdot \ell(\hat{y}_i, y_i)$. Because the weights will impact the model parameter $\theta$, the model is trained to seek:
\begin{equation}
\label{eq:weighted_loss}
    \theta^{*} = \arg \underset{\theta}{\min} \frac{1}{N} \sum_{i=1}^N w_i \cdot \ell_{i}(\theta),
\end{equation}
where $w_i$ weights the prediction error for the $i$-th training sample and $\ell_{i}(\theta)=\ell(\hat{y}_i, y_i)$.

In the following, we present three implementing approaches for the reweighting strategy. 
The first two methods are based on heuristics and rely on prior knowledge of the distribution of the training data, while the third method attempts to learn the optimal weights automatically, guided by a separate and unbiased dataset consisting of only extreme samples.

\subsubsection{Inverse Proportional Function}
The initial approach involves creating a frequency histogram of all training samples and determining the weights for the prediction error of each sample based on the inverse frequency of its group. 
We use $B = 20$ bins in our experiments, with the bin sizes denoted by $\{n_j: j=1, \ldots, B\}$.
The weights for the errors on samples from each bin are set to the inverse of its size, thus making $w_j = \frac{1}{n_j}$ for $j=1, \ldots, B$.

\subsubsection{Extreme Value Theory}
Extreme Value Theory (EVT) takes a further step in studying the extreme values located in the tail zone \cite{malevergne2006power}.
The cumulative distribution function (CDF) of $Z\sim GPD(\mu, \sigma, \xi)$ \cite{norberg1998p} is defined by Eq. (\ref{eq:gpd}):
\begin{equation}
% \small % or \scriptsize for even smaller size
    G_{\xi}(z) = 
    \begin{cases} 
        1-\exp\left(e^{-z}\right), & \xi = 0 \\
        1-\left((1 + \xi z)^{-\frac{1}{\xi}}\right), & \xi \neq 0 
        % 1+\xi y > 0
    \end{cases}
\label{eq:gpd}
\end{equation}
The values exceeding a threshold $\mu$ can be approximated by the generalized Pareto distribution (GPD) if the threshold $\mu$ is sufficiently large \cite{haan2006extreme,chen2023meta}. 
Supposing $T$ random variables $y_1, \ldots, y_T$ are \texttt{i.i.d} sampled from distribution $F_Y$,
we leverage the GPD to estimate the extreme data $F(y)$ \cite{chen2023meta,ding2019modeling,malevergne2006power} in the long-tailed zone as follows. 
% More derivation details can be found in \textcolor{red}{Appendix \ref{sec:extreme_theory}}.
\begin{align}
    1 - F(y) &\approx (1 - F(\xi))(1-G_{\xi}(\frac{y-\mu}{\sigma})), & y > \mu \\
             & = (1 - F(\xi))(1+\frac{\xi(y-\mu)}{\sigma})^{-\frac{1}{\xi}}, & y > \mu
\end{align}
where $\mu$ is the location parameter, $\sigma$ is the scale of the distribution which is analogous to the standard deviation in a normal distribution, and $\xi$ is the extreme value index, determining the heaviness of the tail of the distribution.
 
Finally, the weights for the prediction errors on extreme samples are set to the inverse of their probability:
\begin{equation}
    w_i = 
    \begin{cases} 
        \frac{1}{1-F(y_i)}, & y_i \geq \mu \\
        c, & y_i < \mu   
    \end{cases}
\end{equation}
where $c$ is a small weight assigned to the error on each normal sample.
\subsubsection{Meta Learning}
The preceding two strategies calculate the penalty weights by leveraging prior knowledge of the data distribution.
In our approach, we consider the weights as hyperparameters that can influence the model's parameter $\theta$, as shown in Eq. (\ref{eq:weighted_loss}).
Therefore, we utilize meta-learning to dynamically learn the optimal ones that can minimize the loss function of the exclusive evaluation set: 
\begin{equation}
\label{eq:weight_opt}
    w^* = \arg \underset{w}{\min} \frac{1}{M} \sum_{j=1}^{M} \ell_{j}(\theta^*(w)),
\end{equation}
where $M$ is the size of the evaluation set that includes only extreme samples.

The specific implementation process is described as follows. See the schematic in Figure \ref{fig:meta_reweight} and the pseudo-code in Algorithm \ref{alg:meta_training}. 
For each training iteration, we inspect the descent direction of a batch of training examples locally on the training loss surface and reweight them according to their similarity to the descent direction of the evaluation loss surface. 
At every step $t$ of training, a mini-batch of training examples $\mathcal{D}^{batch}:=\{(x_i, y_i)\}_{i=1}^{n}$ is sampled, where $n$ is the mini-batch size, and $n \ll N$. 
We first initialize the weight, $w_i$, to the prediction error on that training sample within the mini-batch, and use stochastic gradient descent (SGD) to optimize a weighted loss function $\ell_{i,w}(\theta) = w_{i} \cdot \ell_i(\theta)$ with a learning rate $\phi$ (see step 2 in Figure \ref{fig:meta_reweight}):
\begin{equation}
\label{eq:SGD}
    \hat{\theta}_{t+1} = \theta_t - \phi \nabla \left( \frac{1}{n} \sum_{i=1}^{n} w_{i,t} \cdot \ell_i(\theta_t) \right).
\end{equation}
After obtaining the updated model parameters, $\hat{\theta}_{t+1}(w)$, we evaluate them on a mini-batch of evaluation samples, $\mathcal{D}_{extre}^{e}$ of size $m$, with $m \ll M $. See step 3 in Figure \ref{fig:meta_reweight}.
Next, we take a single gradient descent step on a mini-batch of evaluation samples concerning $w_{t}$, and rectify a non-negative weight:
\begin{equation} 
\label{eq:update_weight}
    \hat{w}_{i,t+1} = w_{i,t} - \eta \nabla \left( \frac{1}{m} \sum_{j=1}^{m}\ell^{e}_j(\hat{\theta}_{t+1}(w_t)) \Big|_{w_{i,t}} \right),
\end{equation}
\begin{equation}
    \tilde{w}_{i,t+1} = \max(\hat{w}_{i,t+1}, 0).
\end{equation}
where $\eta$ is the descent step size on weight $w$.
To match the original training step size, we consider normalizing the weights of all examples in a training batch:
\begin{equation}
    w_{i,t+1} = \frac{\tilde{w}_{i,t+1}}{\sum_{j} \tilde{w}_{j,t+1} + \delta\left(\sum_{j} \tilde{w}_{j,t+1}\right)},
\end{equation}
where $\delta(\cdot)$ prevents the degenerate case when all weights are 0 in a mini-batch, i.e. $\delta(a) = 1$ if $a = 0$, and equals $0$ otherwise. 

Then the model parameters $\theta_{t}$ are adjusted to $\theta_{t+1}$ according to the updated penalty weights of the current batch such that so that $\theta_{t+1}$ can consider the meta information from the evaluation set:
\begin{equation}
\label{eq:update_theta}
    \theta_{t+1} = \theta_t - \phi \nabla \left( \frac{1}{n} \sum_{i=1}^{n} w_{i, t+1} \cdot \ell_i(\theta_t) \right).
\end{equation}
\begin{figure}[ht]
\centering
    \includegraphics[width=0.9\columnwidth]{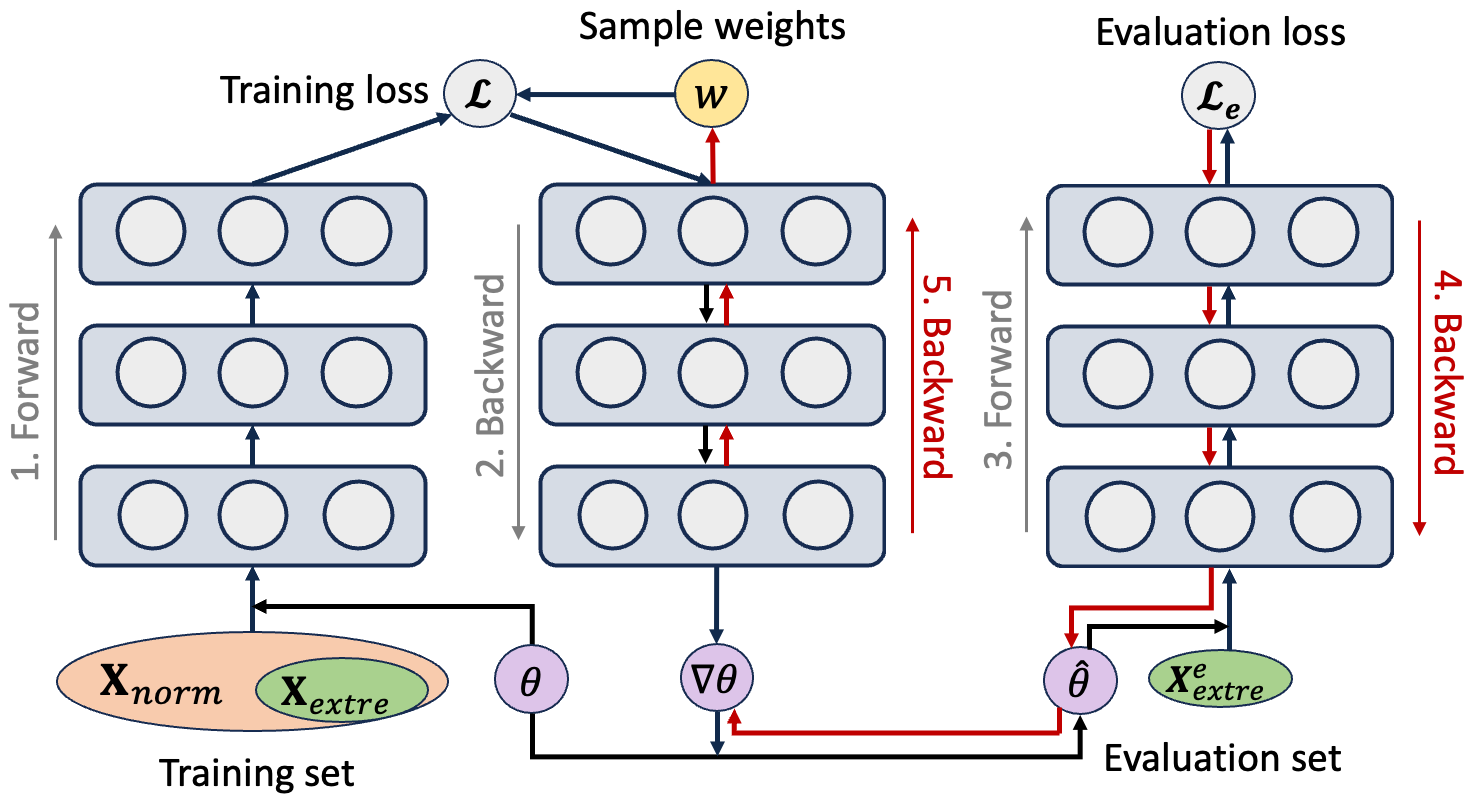} 
\caption{A schematic of the meta-learning-based reweighting method.} 
\label{fig:meta_reweight}
\end{figure}
\begin{algorithm}[ht]
\caption{Pseudo-code of meta-learning for reweighting}
\small
\label{alg:meta_training}
\begin{algorithmic}[1] %[1] line numbers
\STATE \textbf{Input}: training and evaluation set: $\mathcal{D}_{train}, \mathcal{D}_{eval}$ \\ 
\STATE \textbf{Parameter}: batch size: $n, m$, iterations: $T$  \\
\FOR{$t = 0, \ldots, T-1$}
    \STATE $\{X_{train}, y_{train}\} \gets \text{SampleMiniBatch}(D_{train}, n)$
    \STATE $\{X_{eval}, y_{eval}\} \gets \text{SampleMiniBatch}(D_{eval}, m)$
    \STATE // \textcolor{blue}{\texttt{forward and backward on training set}}
    \STATE $\hat{y}_{train} \gets \text{Forward}(X_{train}, y_{train}, \theta_t)$
    \STATE $w_{t} \gets 0; \ell_{train} \gets \frac{1}{n} \sum_{i=1}^n w_{i,t} \cdot \ell(\hat{y}_{{train},i}, y_{{train},i})$
    \STATE $\nabla_{\theta_t} \gets \text{BackwardAD}(\ell_{train}, \theta_t)$
    \STATE $\hat{\theta}_{t+1} \gets \theta_t - \phi \nabla_{\theta_t}$
    \STATE // \textcolor{blue}{\texttt{forward and backward on evaluation set}}
    \STATE $\hat{y}_{eval} \gets \text{Forward}(X_{eval}, y_{eval}, \hat{\theta}_{t+1})$
    \STATE $\ell_{eval} \gets \frac{1}{m} \sum_{j=1}^m \ell(\hat{y}_{eval,j}, y_{eval,j})$
    \STATE $\nabla_{w_t} \gets \text{BackwardAD}(\ell_{eval}, w_t)$
    \STATE // \textcolor{blue}{\texttt{update penalty weights in loss function}}
    \STATE $\hat{w}_{t+1} \gets \hat{w}_{t} - \beta \nabla_{w_t}$
    % \STATE $\hat{w}_{t+1} \gets \text{OptimizerStep}(w_t, \nabla_{w_t})$
    \STATE $\tilde{w}_{t+1} \gets \max(\hat{w}_{t+1}, 0); w_{t+1} \gets \frac{\tilde{w}_{t+1}}{\sum_{j=1}^m \tilde{w}_{j,t+1} + \delta(\sum_{j} \tilde{w}_{j,t+1})}$
    \STATE $\hat{\ell}_{train} \gets \frac{1}{n} \sum_{i=1}^n w_{i,t+1} \cdot \ell(\hat{y}_{train,i}, y_{train,i})$
    \STATE $\nabla_{\theta_t} \gets \text{BackwardAD}(\hat{\ell}_{train}, \theta_t)$
    \STATE // \textcolor{blue}{\texttt{update model parameters}}
    \STATE $\theta_{t+1} \gets \theta_t - \phi \nabla_{\theta_t}$
    % \STATE $\theta_{t+1} \gets \text{OptimizerStep}(\theta_t, \nabla_{\theta_t})$
\ENDFOR
\STATE \textbf{return} well-trained model $f^{*}(\theta)$, optimal weights $w^{*}$
\end{algorithmic}
\end{algorithm}

\subsubsection{Theoretical convergence analysis of meta-learning reweighting} 
\label{sec:lemma}
It is necessary to establish a convergence analysis of our meta-learning-based reweighting method since it involves the optimization of bi-level objectives (Eqs. \ref{eq:weighted_loss}, \ref{eq:weight_opt}).
In this context, we theoretically demonstrate that our method converges to the critical point of the evaluation loss function under certain mild conditions. 
In this context, the following lemma guarantees convergence of the evaluation loss.
\begin{definition}[\textbf{$\sigma$-bounded gradients \cite{garrigos2023handbook}}]
    $f(x)$ has $\sigma$-bounded gradients if $\lVert \nabla f(x) \rVert \leq \sigma$ for all $x \in \mathbb{R}^d$.
\end{definition}
%
% \begin{definition}[\textbf{Lipschitz-smooth \cite{bubeck2015convex,ren2018learning}}]
%     A continuously differentiable function $f(x): \mathbb{R}^d \rightarrow \mathbb{R}$ is Lipschitz-smooth with constant $L$ if
%     \begin{equation}
%     \lVert \nabla f(x) - \nabla f(y) \rVert \leq L \lVert x - y \rVert, \forall x, y \in \mathbb{R}^d.
%     \end{equation}
% \end{definition}
%
%
% In practice, the evaluation set is often limited in size, enabling us to align the mini-batch size $m$ with the evaluation set size $M$. Meta-learning-based reweighting leverages the larger training dataset while converging towards the preferred distribution of the clean and balanced evaluation set (i.e., extreme samples in our work). This approach improves model generalization and resistance to training dataset biases, as confirmed by our experiments. In this context, the following lemma guarantees convergence of the evaluation loss. 

\begin{lemma}
    \textit{Suppose the evaluation loss function is Lipschitz-smooth with constant $L$, and the train loss function $\ell_i$ of training data $x_i$ has $\sigma$-bounded gradients. Let the learning rate $\phi$ satisfy $\phi \leq \frac{2n}{L \sigma^2}$, where $n$ is the training batch size. Following our algorithm, the evaluation loss always monotonically decreases for any training batches,}
\begin{equation}
\label{eq:lipschitz}
    \mathcal{L}(\theta_{t+1}) \leq \mathcal{L}(\theta_t),
\end{equation}
where \( \mathcal{L}(\theta) \) is the total evaluation loss
\begin{equation}
    \mathcal{L}(\theta) = \frac{1}{M} \sum_{j=1}^{M} \ell_j^{e} (\theta_{t+1}(w)).
\end{equation}
The equality $\mathcal{L}(\theta_{t+1}) = \mathcal{L}(\theta_t)$ in Eq. (\ref{eq:lipschitz}) holds only when the gradient of evaluation loss becomes 0 at some time step $t$, namely $\mathbb{E}_{t}[\mathcal{L}(\theta_{t+1})] = \mathcal{L}(\theta_t)$ if and only if $\nabla\mathcal{L}(\theta_t) = 0$, where the expectation represents the possible training batches at time step $t$. 
\end{lemma}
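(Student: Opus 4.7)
My overall plan is to start from the $L$-smoothness of $\mathcal{L}$, apply the standard descent lemma, and then argue that the structural form of the meta-learned weights makes the first-order term non-positive while the step-size condition $\phi \leq 2n/(L\sigma^2)$ ensures the second-order term cannot overturn it. Concretely, $L$-smoothness gives
\begin{equation*}
\mathcal{L}(\theta_{t+1}) \leq \mathcal{L}(\theta_t) + \langle \nabla\mathcal{L}(\theta_t), \theta_{t+1}-\theta_t \rangle + \tfrac{L}{2}\|\theta_{t+1}-\theta_t\|^2,
\end{equation*}
and by Eq.~(\ref{eq:update_theta}) the update direction is $\theta_{t+1}-\theta_t = -\tfrac{\phi}{n}\sum_{i=1}^n w_{i,t+1}\, g_i$, with $g_i := \nabla \ell_i(\theta_t)$. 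So the whole argument reduces to understanding $w_{i,t+1}$ well enough to control these two terms.

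The second step is to unroll the meta-gradient to obtain a closed form for $w_{i,t+1}$. Since $\hat{\theta}_{t+1}(w) = \theta_t - \tfrac{\phi}{n}\sum_i w_i g_i$ is linear in $w$, the chain rule at $w_{i,t}=0$ gives $\partial_{w_i}\ell^e_{eval}|_{w=0} = -\tfrac{\phi}{n}\langle G_e, g_i\rangle$, where $G_e$ is the evaluation mini-batch gradient. After Eqs.~(\ref{eq:update_weight})--(\ref{eq:update_theta}) and the $\max(\cdot,0)$ clip plus normalization, this yields $w_{i,t+1} = \max(\langle G_e, g_i\rangle, 0)/Z$ with $Z=\sum_j \max(\langle G_e, g_j\rangle,0)$ (the $\eta\phi/n$ scale factor cancels in the normalization, and when $Z=0$ the $\delta$ correction makes the whole update vanish).

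Third, I would bound the two terms. Using the closed form, the first-order term equals $-\tfrac{\phi}{nZ}\sum_i \max(\langle G_e, g_i\rangle,0)^2$, and Cauchy--Schwarz in the form $Z^2 \leq n \sum_i \max(\langle G_e, g_i\rangle,0)^2$ lower-bounds its magnitude by $\tfrac{\phi Z}{n^2}$ after replacing $G_e$ by $\nabla\mathcal{L}(\theta_t)$ via the unbiased-mini-batch identification (a clean version of the argument takes expectation over evaluation batches). For the second-order term I use $\sum_i w_{i,t+1}=1$ together with convexity of $\|\cdot\|^2$ and $\|g_i\|\leq \sigma$ to get $\|\sum_i w_{i,t+1} g_i\|^2 \leq \sigma^2$, hence the term is at most $\tfrac{L\phi^2 \sigma^2}{2n^2}$. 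Plugging both into the descent lemma and invoking $\phi \leq 2n/(L\sigma^2)$ produces $\mathcal{L}(\theta_{t+1}) \leq \mathcal{L}(\theta_t)$, with equality only if the clipped terms all vanish. When $\nabla\mathcal{L}(\theta_t)=0$, every $\max(\langle G, g_i\rangle,0)$ is zero in expectation, the degenerate branch of the normalization activates, the update collapses to $\theta_{t+1}=\theta_t$, and $\mathbb{E}_t[\mathcal{L}(\theta_{t+1})]=\mathcal{L}(\theta_t)$ follows; the converse is immediate from strict negativity of the first-order term whenever some $\langle G, g_i\rangle > 0$ is attained in expectation over the training batch.

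The main obstacle I expect is the nonlinearity introduced by the $\max(\cdot, 0)$ clipping combined with the normalization denominator $Z$. The clipping is precisely what makes the first-order term non-positive term-by-term, but it also destroys the linearity needed to swap expectations and derivatives when passing from the mini-batch evaluation gradient $G_e$ to the full gradient $\nabla\mathcal{L}(\theta_t)$. Handling this rigorously requires either conditioning on the evaluation batch and doing the analysis pathwise in $G_e$, or using Jensen's inequality on $\max$ and then absorbing the resulting slack into the step-size bound. A secondary but subtle point is checking that the $\delta$-adjusted denominator does not break any of the above inequalities in boundary cases, which I would dispatch by inspecting the two branches $Z=0$ and $Z>0$ separately.
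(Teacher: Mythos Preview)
Your high-level strategy matches the paper's: apply the descent lemma from $L$-smoothness, unroll the one-step meta-update to expose $w_{i,t+1}$ in terms of $\max\{\langle\nabla\mathcal{L},g_i\rangle,0\}$, and then show that the first-order term is non-positive and dominates the second-order term under the step-size bound.

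The gap in your plan is the normalization. You keep the denominator $Z=\sum_j\max\{\langle G,g_j\rangle,0\}$, obtain a first-order contribution of magnitude at least $\phi Z/n^2$ via Cauchy--Schwarz, and pair it with a second-order bound $\tfrac{L\phi^2\sigma^2}{2n^2}$ that is \emph{independent of $Z$}. Combining these requires $\phi\le 2Z/(L\sigma^2)$, not the stated $\phi\le 2n/(L\sigma^2)$. Since $Z$ can be arbitrarily small on a training batch whose gradients are nearly orthogonal to $\nabla\mathcal{L}$, the lemma's hypothesis does not close your inequality. The Cauchy--Schwarz step is precisely where you discard the factor that would have matched the second-order term; your Jensen bound on $\|\sum_i w_i g_i\|^2\le\sigma^2$ is tight regardless of how weak the alignment is, so the two bounds cannot be balanced uniformly in $Z$.

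The paper avoids this by dropping the normalization in the analysis. It merges the weight and parameter updates directly into
\[
\theta_{t+1}=\theta_t-\tfrac{\phi}{n}\sum_{i\in B}\max\{\nabla\mathcal{L}^{\mathsf T}\nabla\ell_i,0\}\,\nabla\ell_i,
\]
i.e.\ it works with the \emph{unnormalized} weights $w_i=\max\{\nabla\mathcal{L}^{\mathsf T}\nabla\ell_i,0\}$. Then the first-order term is exactly $-\tfrac{\phi}{n}\sum_i\max\{\cdot\}^2$, and the second-order term is bounded by $\tfrac{L\phi^2\sigma^2}{2n^2}\sum_i\max\{\cdot\}^2$. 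The common factor $\Gamma_t:=\sum_i\max\{\cdot\}^2$ pulls out, leaving
\[
\mathcal{L}(\theta_{t+1})\le\mathcal{L}(\theta_t)-\tfrac{\phi}{n}\,\Gamma_t\Bigl(1-\tfrac{L\phi\sigma^2}{2n}\Bigr),
\]
which is non-positive precisely when $\phi\le 2n/(L\sigma^2)$, and vanishes iff $\Gamma_t=0$. If you insist on analyzing the normalized algorithm, you need a different pairing of bounds so that both terms carry the same $Z$-dependent factor; the pairing you propose does not achieve that.
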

\begin{proof}
Suppose we have another $N$ training data, $\{x_{1}, x_{2}, \ldots, x_N\}$, and the overall training loss would be $\frac{1}{N} \sum_{i=1}^{N} w_i \cdot \ell_i(\theta)$.
During training, we take a mini-batch $B$ of training data at each step and validate the model with a mini-batch $B$ of evaluation data. We set $|B| = n = m$ where $n$ and $m$ are the batch sizes of training and evaluation data, respectively. By merging Eqs. (\ref{eq:update_weight}, \ref{eq:update_theta}), we can derive:
\begin{equation}
\label{eq:theta_diff}
    \theta_{t+1} = \theta_t - \phi \frac{1}{n} \sum_{i \in B} \max \{ \nabla \mathcal{L}^{\texttt{T}} \nabla \ell_i, 0 \} \nabla \ell_i, 
\end{equation}
where $\phi_t$ is the learning rate at time-step $t$, $\max \{ \nabla \mathcal{L} \nabla \ell_i, 0 \}$ is the evaluation gradients with respect to the weights, and $\nabla \ell_i$ is the training gradients with respect to the parameters $\theta_t$. \\

\noindent Since the evaluation loss $\mathcal{L}(\theta)$ is Lipschitz-smooth \cite{bubeck2015convex} with constant $L$
\begin{equation}
    \lVert \nabla \mathcal{L}(x) - \nabla \mathcal{L}(y) \rVert \leq L \lVert x - y \rVert, \forall x, y \in \mathbb{R}^d,
\end{equation}
and consider the Taylor's Remainder Theorem \cite{poffald1990remainder}, we have:
\begin{equation}
\label{eq:loss_inequal}
    \mathcal{L}(\theta_{t+1}) \leq \mathcal{L}(\theta_t) + \nabla \mathcal{L}^{\texttt{T}} \Delta \theta + \frac{L}{2} \|\Delta\theta\|^2. 
\end{equation}
Now we need to prove $\mathcal{L}^{\texttt{T}} \Delta \theta + \frac{L}{2} \|\Delta\theta\|^2 \leq 0$. Plugging $\Delta \theta_t$ from Eq. (\ref{eq:theta_diff}) into $\mathcal{L}^{\texttt{T}} \Delta \theta$, we have
\begin{equation}
\begin{aligned}
\nabla \mathcal{L}^{\texttt{T}} \Delta \theta &= -\frac{\phi}{n} \sum_{i \in B} \max \{ \nabla \mathcal{L}^{\texttt{T}} \nabla \ell_i, 0 \} \nabla \mathcal{L}^{\texttt{T}} \nabla \ell_i,  \\
                                              &= -\frac{\phi}{n} \sum_{i \in B} \max\{\nabla \mathcal{L}^{\texttt{T}} \nabla \ell_i, 0\}^2 \leq 0 \text{ holds},
\end{aligned}
\end{equation}
and,
% \begin{equation}
\begin{align}
\frac{L}{2} \|\Delta\theta\|^2  &= \frac{L}{2} \left( \frac{\phi}{n} \sum_{i \in B} \max\{\nabla \mathcal{L}^{\texttt{T}} \nabla \ell_i, 0\}\nabla \ell_i \right)^2, \\
                                &\leq \frac{L \phi^2}{2 n^2} \sum_{i \in B} \left| \max \{\nabla \mathcal{L}^{\texttt{T}} \nabla \ell_i, 0\} \nabla \ell_i \right|^2, \label{eq:inequal1} \\
                                &= \frac{L \phi^2}{2 n^2} \sum_{i \in B} \max\{\nabla \mathcal{L}^{\texttt{T}} \nabla \ell_i, 0\}^2 \|\nabla \ell_i\|^2,  \\
                                &\leq \frac{L \phi^2}{2 n^2} \sum_{i \in B} \max\{\nabla \mathcal{L}^{\texttt{T}} \nabla \ell_i, 0\}^2 \sigma^2 \label{eq:inequal2}. 
\end{align}
% \end{equation}
%
The first inequality in Eq. (\ref{eq:inequal1}) comes from the triangle inequality. The second inequality in Eq. (\ref{eq:inequal2}) holds since $\ell_i$ has $\sigma$-bounded gradients \cite{garrigos2023handbook}. If we let $\Gamma_t = \max\{\nabla \mathcal{L}^{\texttt{T}} \nabla \ell_i, 0\}^2$, then
\begin{equation}
    \mathcal{L}(\theta_{t+1}) \leq \mathcal{L}(\theta_t) - \frac{\phi}{n} \Gamma_t \left( 1 - \frac{L\phi}{2n} \sigma^2 \right). 
\end{equation}
Note that $\Gamma_t$ is non-negative, and since $\phi \leq \frac{2n}{L\sigma^2}$, it follows that $\mathcal{L}(\theta_{t+1}) \leq \mathcal{L}(\theta)$ for any $t$. 
\end{proof}

\subsection{Fine-tuning}
\label{sec:finetune}
After applying the reweighting technique, a set of weights has been computed for the prediction errors of training samples in the loss function. % the models can differentiate the significance between normal and extreme samples. 
Here, we elucidate the process of adapting the trained models to achieve robust generalization for extreme samples.
% In recent years, a variety of fine-tuning techniques have been introduced, including methods such as Adapter tuning \cite{houlsby2019parameter}, Prompt tuning \cite{lester2021power}, Prefix tuning \cite{li2021prefix} and LoRA \cite{hu2021lora} originating from the field of natural language processing (NLP).
In our tasks, we freeze the first several layers to maintain the original comprehensive knowledge of both the majority normal and minority extreme samples.
Subsequently, we conduct fine-tuning exclusively on the latter layers to adapt the model using only the extreme samples (see the bottom fine-tuning in Figure \ref{fig:method}). 
Given the typically limited number of training samples used during fine-tuning, we add $L2$ regularization to the remaining trainable layers as a precaution against potential overfitting.
\section{Experiments}
\subsection{Datasets}
\label{sec:dataset}
We conduct experiments on four public real-world data sets: Beijing PM2.5, Jena Climate, Spain Electrical Demand, and South Florida water management data. The summary of each dataset is shown in Table \ref{tab:datasets}.
\begin{itemize}
    \item \textbf{Beijing PM2.5 \cite{misc_beijing_pm25}.} 
    The PM2.5 index is the target variable to predict; covariates include dew, temperature, pressure, wind speed, wind direction, snow, and rain. $PM2.5 \in [0, 671] \mu g/m^3$.
    \item \textbf{Jena Climate \cite{misc_jena_climate}.} 
    Recorded by the Max Planck Institute in Jena, Germany for Biogeochemistry, this dataset consists of features such as temperature, pressure, and humidity, recorded once every 10 minutes. We use the hourly data for our experiments. Saturation vapor pressure is the target variable to predict and its values $\in [0, 62.94]$ mbar.
    \item \textbf{Spain Electricity \cite{misc_spain_electricity}.}
    This dataset contains data on electrical consumption, generation, pricing, and weather in Spain. In this dataset, we predict two target variables: electricity price $\in [\$9.33, \$116.8]$ and the load $\in [18041.0, 41015.0]$.
    \item \textbf{Florida Water \cite{shi2023deep}.} 
    It includes water levels at multiple stations, control schedules of hydraulic structures, tide and rainfall information in South Florida. Water levels are the target variables $\in \text{[-1.25, 4.05]}$ feet.
\end{itemize}
\begin{table}[ht]
\centering
\caption{Summary of Datasets}
% % \scriptsize % Further reduce font size
% \setlength{\tabcolsep}{3pt} % Reduce column padding
    \begin{tabular}{lcccc}
    \toprule
    Dataset   & PM2.5      & Climate    & Electricity & Water Level\\
    \midrule
    Start     & 2010/01/01 & 2009/01/10 & 2015/01/01 & 2010/01/01 \\
    End       & 2014/12/31 & 2016/12/31 & 2018/12/31 & 2020/12/31 \\
    Interval  & 1 hour     & 1 hour     & 1 hour     & 1 hour  \\
    \#Time Point    & 43,800     & 70,129     & 35,063     & 96,432\\
    \#Feature & 11         & 14         & 26         & 19 \\ 
    \#Extreme & 2,180      & 3,507      & 1,752      & 4,715 \\ 
    \#Normal  & 41,620     & 66,622     & 33,311     & 91,717 \\ 
    E:N ratio & 1:19       & 1:19       & 1:19       & 1:19 \\ 
    \bottomrule
    \end{tabular}
\label{tab:datasets}
\end{table}

\subsection{Experiment Setting}
\label{sec:exp}
We set the length of look-back window $\alpha=72$ hours and prediction length $\beta=12$ or $24$ for time series forecasting ($\beta=24$ for the last data set while $\beta=12$ for others). 
In the cases of the first three datasets, we define extreme samples by examining the values of target variables that exceed $95^{th}$ percentile. We aim to predict these extreme events in the future $\beta$ time points.
For the last data set, we select extreme samples by calculating the covariate precipitation that is over $95^{th}$ percentile and predict the water levels in the river since heavy rainfall events have much impact.

\subsection{Training and Evaluation}
Each data set has been divided in chronological order with 70\% for training, 15\% for validation\footnote{The validation set with only extreme samples serve as the evaluation set in Figures \ref{fig:formulation} and \ref{fig:meta_reweight}.}, and 15\% for testing.
To prove the efficacy of reweighting and fine-tuning strategies, we choose the simple multi-layer perceptron (MLP) as the backbone. The architecture comprises 8 hidden layers, with each layer being a fully connected layer consisting of $128, 128, 64, 64, 32, 32, 16,$ and $16$ neurons, respectively. To potentially regularize the model, each hidden layer is followed by a Dropout layer, and we considered dropout factors from the set {0, 0.1, 0.2} as candidates. In total, there are 16 layers between \texttt{Input} and \texttt{Output} layer.
We apply Max-Min normalization to scale the input data within the range [0, 1], mitigating potential biases stemming from varying scales. 
The learning rate is $1e-4$, the batch size is $500$, and $1000$ and $500$ epochs are used for reweighting and fine-tuning.
We utilize early stopping with $50$ patience steps and regularization $L_2=1e-6$ to counteract overfitting. 
After obtaining the well-trained models, we test them on the extreme samples from the test set using mean absolute errors (MAEs) and root mean square errors (RMSEs). 
All experiments are performed with one NVIDIA A100 GPU with 24G memory.

\subsection{Baselines}
We consider baselines including unweighted models, \texttt{LSTM}, \texttt{Transformer} and \texttt{Informer}, and some existing weighted models using the inverse proportional function (\texttt{IPF}), extreme value theory (\texttt{EVT}), and \texttt{NEC+}.
\begin{itemize}
    \item \texttt{TCN} \cite{van2016wavenet}.
    A model that uses a hierarchy of temporal convolutional networks (TCNs) for time series forecasting.
    \item \texttt{LSTM} \cite{graves2012long}. 
    A variant of recurrent neural networks (RNN) aims at dealing with the vanishing gradient problem present in traditional RNNs.
    \item \texttt{Transformer} \cite{vaswani2017attention}. 
    An \emph{attention}-based model that can be used for time series forecasting.
    % \item \texttt{Informer} \cite{zhou2021informer}. 
    % A model for long sequence time-series prediction with a prob-sparse self-attention mechanism.
    \item \texttt{Autoformer} \cite{wu2021autoformer}.
    An \emph{attention}-based model with the auto-correlation mechanism for long-term time series prediction.
    \item \texttt{FEDformer} \cite{zhou2022fedformer}.
    A frequency-enhanced decomposed Transformer architecture with seasonal-trend decomposition for time series forecasting.
    \item \texttt{NEC+} \cite{li2023extreme}.
    A reweighted benchmark for extreme event prediction by assigning a probability as the weight for extreme and normal classes.
    \item \texttt{IPF} \cite{patterson2013parasite}. 
    A reweighted method to deal with imbalanced data determines the weights based on the frequency histogram of training samples.
    \item \texttt{EVT} \cite{ding2019modeling}. 
    A reweighted method that determines the weights based on the extreme value theory.
\end{itemize}

\subsection{Reweighting}
Table \ref{tab:reweight} reports the results across four datasets on five cases.
The reweighting methods implemented in our work demonstrate a statistically significant and consistent improvement over the benchmarks.
The meta-learning-based reweighting surpasses the other two methods (\texttt{IPF} and \texttt{EVT}) that determine weights using prior knowledge of the data distribution. This confirms the significant advantages of seeking optimal weights in an automated manner.
Moreover, the heuristic reweighting techniques that employ the inverse proportional function (IPF) and extreme value theory (EVT) perform closely to each other.
We provide a visualization of 50 samples at time $t+1$ in Figure \ref{fig:pred_true_vis}.

Additionally, we conduct an ablation study on the unweighted \texttt{MLP} model by including only normal or extreme samples during training. The results are shown in Table \ref{tab:ablation}, it is worth noting that \texttt{Unweighted\_Both} performs better than the other two methods, \texttt{Unweighted\_Normal} and \texttt{Unweighted\_Extreme}. 
This observation shows training solely on normal samples struggles to adapt to dynamic distribution changes from extreme samples during testing, while exclusive training on extremes risks overfitting due to limited sample quantity.
This emphasizes the significance of incorporating both normal and extreme samples.
\begin{table*}[ht]
\centering
\caption{Experimental results on \textcolor{red}{extreme} samples in the test set. The names starting with ``Reweight'' represent models implemented in our work. $\Delta$ denotes the relative improvement of our best reweighting method in bold$^{*}$ compared with the best benchmark with \underline{underline}. $^{*}$: p-value $< 0.05$.}
\begin{tabular}{l|cc|cc|cc|cc|cc}
\toprule
\multirow{3}{*}{\textbf{Model}} &  \multicolumn{2}{c|}{\textbf{Ele-Price}} & \multicolumn{2}{c|}{\textbf{Pressure}} & \multicolumn{2}{c|}{\textbf{PM25}} & \multicolumn{2}{c|}{\textbf{Ele-Load}} & \multicolumn{2}{c}{\textbf{Water Level}}\\
\cmidrule(lr){2-3} \cmidrule(lr){4-5} \cmidrule(lr){6-7} \cmidrule(lr){8-9} \cmidrule(lr){10-11}
  & \multicolumn{1}{c}{\textbf{MAE}} & \multicolumn{1}{c|}{\textbf{RMSE}} & {\textbf{MAE}} & {\textbf{RMSE}} & {\textbf{MAE}} & {\textbf{RMSE}} & {\textbf{MAE}} & {\textbf{RMSE}} & {\textbf{MAE}} & {\textbf{RMSE}} \\
\midrule
TCN         & 3.84              & 5.21              & \underline{2.96}  & 4.81              & 38.58             & 58.15             & 1747.50             & 2465.90             & 0.148             & 0.188 \\
NEC+        & 4.02              & 5.25              & 3.52              & 4.89              & 44.54             & 63.10             & 1698.59             & 2059.74             & 0.141             & 0.181\\
LSTM        & 4.20              & 5.36              & 2.99              & \underline{4.06}  & 42.71             & 61.71             & 1653.50             & 2144.55             & \underline{0.115} & \underline{0.151} \\
Transformer & \underline{3.71}  & \underline{4.83}  & 2.98              & 4.21              & 38.62             & 57.81             & \underline{1386.93} & \underline{1806.63} & 0.116             & 0.159 \\
Autoformer  & 4.85              & 6.29              & 3.74              & 5.29              & 55.56             & 57.91             & 1610.72             & 2276.99             & 0.164             & 0.213 \\
FEDformer   & 3.99              & 5.21              & 3.72              & 5.22              & \underline{37.68} & \underline{55.71} & 1644.16             & 2241.10             & 0.153             & 0.197 \\
\midrule
Reweight\_\texttt{IPF}  & 3.54           & 4.64              & 2.89              & 4.01              & 36.53             & 54.69             & 1357.58             & 1681.05             & 0.108             & 0.154\\
Reweight\_\texttt{EVT}  & 3.57           & 4.67              & 2.91              & 4.04              & 36.81             & 54.31             & 1304.71             & 1644.91             & 0.112             & 0.158\\
Reweight\_\texttt{META} & \textbf{3.52$^{*}$} & \textbf{4.62$^{*}$} & \textbf{2.75$^{*}$} & \textbf{3.89$^{*}$} & \textbf{35.18$^{*}$} & \textbf{53.55$^{*}$} & \textbf{1129.36$^{*}$} & \textbf{1434.92$^{*}$} & \textbf{0.106$^{*}$} & \textbf{0.142$^{*}$}\\
\midrule
\textcolor{brown}{Improvement $\Delta$ \%} & \textcolor{brown}{7.85\%} & \textcolor{brown}{6.10\%} & \textcolor{brown}{7.09\%} & \textcolor{brown}{4.18\%} & \textcolor{brown}{6.63\%} & \textcolor{brown}{3.87\%} & \textcolor{brown}{18.57\%} & \textcolor{brown}{20.57\%} & \textcolor{brown}{7.82\%} & \textcolor{brown}{5.96\%}\\
\bottomrule
\end{tabular}
\label{tab:reweight}
\end{table*}
\begin{table*}[ht]
\centering
\caption{The boosting performance of fine-tuning on the basic reweighting method in Table \ref{tab:reweight}. $\Delta$ denotes the relative improvement of our fine-tuning method compared to the previous unweighted/reweighted method without fine-tuning.}
\begin{tabular}{l|cc|cc|cc|cc|cc}
\toprule
\multirow{3}{*}{\textbf{Model}} &  \multicolumn{2}{c|}{\textbf{Ele-Price}} & \multicolumn{2}{c|}{\textbf{Pressure}} & \multicolumn{2}{c|}{\textbf{PM25}} & \multicolumn{2}{c|}{\textbf{Ele-Load}} & \multicolumn{2}{c}{\textbf{Water Level}}\\
\cmidrule(lr){2-3} \cmidrule(lr){4-5} \cmidrule(lr){6-7} \cmidrule(lr){8-9} \cmidrule(lr){10-11}
  & \multicolumn{1}{c}{\textbf{MAE}} & \multicolumn{1}{c|}{\textbf{RMSE}} & {\textbf{MAE}} & {\textbf{RMSE}} & {\textbf{MAE}} & {\textbf{RMSE}} & {\textbf{MAE}} & {\textbf{RMSE}}  & {\textbf{MAE}} & {\textbf{RMSE}}\\
\midrule
Reweight\_IPF & 3.54 & 4.64 & 2.89 & 4.01 & 36.53 & 54.69 & 1357.58 & 1681.05 & 0.108 & 0.154\\
Reweight\_IPF\_Finetune & 3.52 & 4.59 & 2.75 & 3.79 & 35.46 & 53.35 & 1288.52 & 1617.46 & 0.103 & 0.145\\
\textcolor{brown}{Improvement $\Delta$ \%} & \textcolor{brown}{0.56\%} & \textcolor{brown}{1.08\%} & \textcolor{brown}{4.84\%} & \textcolor{brown}{5.49\%} & \textcolor{brown}{2.93\%} & \textcolor{brown}{2.45\%} & \textcolor{brown}{5.09\%} & \textcolor{brown}{3.78\%} & \textcolor{brown}{4.63\%} & \textcolor{brown}{5.84\%} \\
\midrule

Reweight\_EVT & 3.57 & 4.67 & 2.91 & 4.04 & 36.81 & 54.31 & 1304.71 & 1644.91 & 0.112 & 0.158\\
Reweight\_EVT\_Finetune & 3.51 & 4.53 & 2.73 & 3.89 & 36.19 & 53.29 & 1200.23 & 1532.90 & 0.104 & 0.148 \\
\textcolor{brown}{Improvement $\Delta$ \%} & \textcolor{brown}{1.68\%} & \textcolor{brown}{3.00\%} & \textcolor{brown}{6.19\%} & \textcolor{brown}{3.71\%} & \textcolor{brown}{1.68\%} & \textcolor{brown}{1.88\%} & \textcolor{brown}{8.01\%} & \textcolor{brown}{6.81\%} & \textcolor{brown}{7.14\%} & \textcolor{brown}{6.33\%} \\
\midrule

Reweight\_META & 3.52 & 4.62 & 2.75 & 3.89 & 35.18 & 53.55 & 1129.36 & 1434.92 & 0.106 & 0.142 \\
Reweight\_META\_Finetune & 3.50 & 4.57 & 2.66 & 3.76 & 34.54 & 52.80 & 1141.56 & 1464.46 & 0.103 & 0.139 \\
\textcolor{brown}{Improvement $\Delta$ \%} & \textcolor{brown}{0.57\%} & \textcolor{brown}{1.08\%} & \textcolor{brown}{3.27\%} & \textcolor{brown}{3.34\%} & \textcolor{brown}{1.82\%} & \textcolor{brown}{1.40\%} & \textcolor{brown}{-1.08\%} & \textcolor{brown}{-2.06\%} & \textcolor{brown}{6.60\%} & \textcolor{brown}{2.11\%} \\
\bottomrule
\end{tabular}
\label{tab:finetune}
\end{table*}
\begin{table*}[ht]
\centering
\caption{Abalation study by exclusively training on ``Normal", ``Extreme'' samples. Below is the experimental results on \textcolor{red}{extreme} samples in the test set without reweighting. ``Normal", ``Extreme'' and ``Both" refer to the unweighted methods trained using only normal samples, extreme samples, and both. The best is marked in bold.}
\begin{tabular}{l|cc|cc|cc|cc|cc}
\toprule
\multirow{3}{*}{\textbf{Model}} &  \multicolumn{2}{c|}{\textbf{Ele-Price}} & \multicolumn{2}{c|}{\textbf{Pressure}} & \multicolumn{2}{c|}{\textbf{PM25}} & \multicolumn{2}{c|}{\textbf{Ele-Load}} & \multicolumn{2}{c}{\textbf{Water Level}}\\
\cmidrule(lr){2-3} \cmidrule(lr){4-5} \cmidrule(lr){6-7} \cmidrule(lr){8-9} \cmidrule(lr){10-11}
  & \multicolumn{1}{c}{\textbf{MAE}} & \multicolumn{1}{c|}{\textbf{RMSE}} & {\textbf{MAE}} & {\textbf{RMSE}} & {\textbf{MAE}} & {\textbf{RMSE}} & {\textbf{MAE}} & {\textbf{RMSE}} & {\textbf{MAE}} & {\textbf{RMSE}} \\
\midrule
MLP\_Unweight\_ Normal & 4.02 & 5.20 & 3.37 & 4.48 & 38.84 & \textbf{57.91} & 1700.15 & 2170.72 & 0.191 & 0.259 \\
MLP\_Unweight\_Extreme & 4.95 & 6.25 & 3.26 & 4.41 & 47.5 & 65.75 & 2054.33 & 2540.56 & 0.173 & 0.229 \\
MLP\_Unweight\_Both & \textbf{3.82} & \textbf{4.92} & \textbf{3.11} & \textbf{4.17} & \textbf{37.76} & 58.87 & \textbf{1519.34} & \textbf{1890.17} & \textbf{0.127} & \textbf{0.175} \\
\bottomrule
\end{tabular}
\label{tab:ablation}
\end{table*}
\begin{figure}[htbp]
    \centering
    \begin{subfigure}[b]{\columnwidth}
        \centering
        \includegraphics[width=0.8\linewidth]{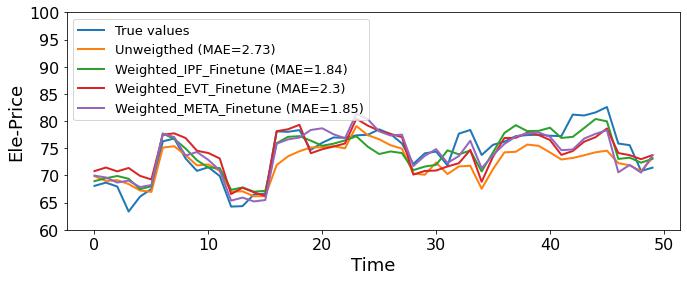}
        \caption{Ele-price}
    \end{subfigure}
    \vfill
    \begin{subfigure}[b]{\columnwidth}
        \centering
        \includegraphics[width=0.8\linewidth]{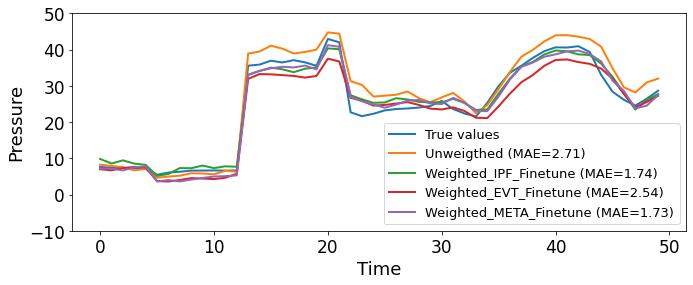}
        \caption{Pressure}
    \end{subfigure}
    \vfill
    \begin{subfigure}[b]{\columnwidth}
        \centering
        \includegraphics[width=0.8\linewidth]{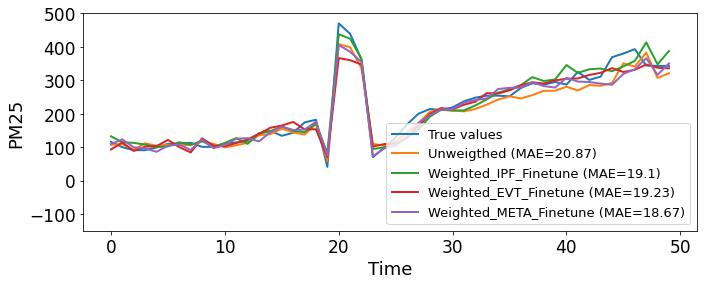}
        \caption{PM25}
    \end{subfigure}
    \vfill
    \begin{subfigure}[b]{\columnwidth}
        \centering
        \includegraphics[width=0.8\linewidth]{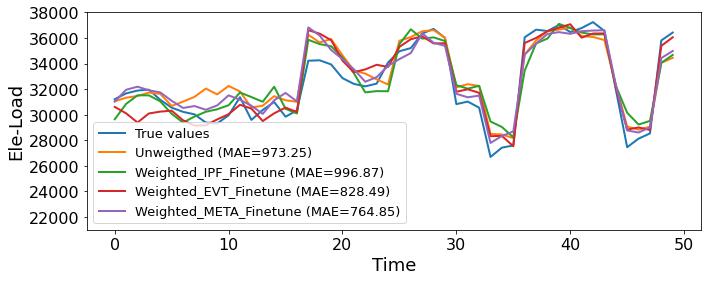}
        \caption{Ele-load}
    \end{subfigure}
    \vfill
    \begin{subfigure}[b]{\columnwidth}
        \centering
        \includegraphics[width=0.8\linewidth]{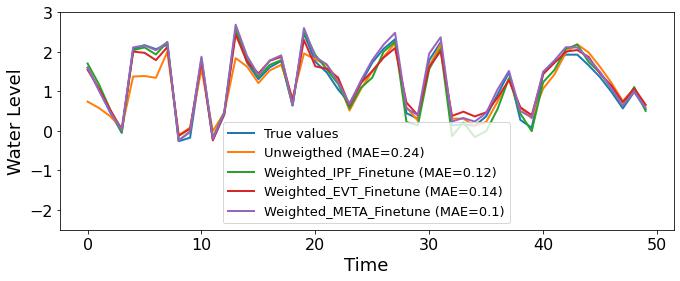}
        \caption{Water Level}
    \end{subfigure}
\caption{Visualization of truth and prediction. ``Unweighted'' is the baseline model without reweighting and fine-tuning, while the last three are with reweighting and fine-tuning.}
\vspace{-3mm}
\label{fig:pred_true_vis}
\end{figure}
\subsection{Fine-tuning}
To illustrate the boosting efficacy of the fine-tuning strategy, we fine-tune the previously reweighted model by re-training them on only rare extreme events. 
Table \ref{tab:finetune} contrasts the efficacy of models with and without fine-tuning across various datasets. We can observe fine-tuning strategy tends to further elevate the performance (refer to rows 6-11) of two heuristic reweighting methods, which underscores the value of fine-tuning and suggests that heuristic reweighting may have potential areas for improvement.
Conversely, applying fine-tuning to meta-learning-based reweighting results in minimal or even adverse effects, as seen with the \texttt{Ele-Load} data set in the final row, implying that meta-learning-based reweighting may already be at or near optimal efficacy.

Overall, while fine-tuning generally leads to improvements, the extent of its impact is influenced by both the method and the dataset in question.
It is worth noting that while the enhancement achieved through fine-tuning may appear modest, it holds significant value as it serves to \textbf{further} augment the already effective reweighting methods.
\begin{figure*}[ht]
\centering
    \begin{subfigure}[b]{0.38\columnwidth}
        \includegraphics[scale=0.26]{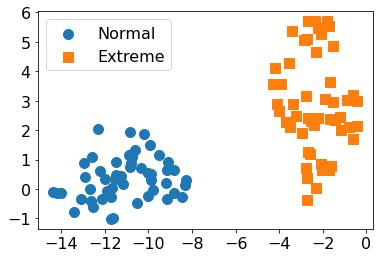}
        \caption{Ele-price}
        
    \end{subfigure}
    \hfill
    \begin{subfigure}[b]{0.38\columnwidth}
        \includegraphics[scale=0.26]{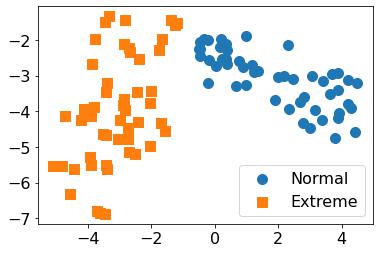}
        \caption{Pressure}
        
    \end{subfigure}
    \hfill
    \begin{subfigure}[b]{0.38\columnwidth}
        \includegraphics[scale=0.26]{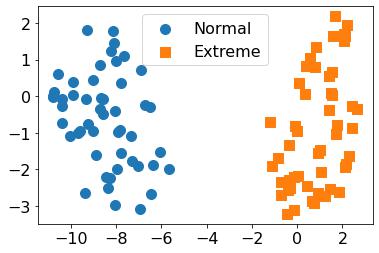}
        \caption{PM25}
        
    \end{subfigure}
    \hfill
    \begin{subfigure}[b]{0.38\columnwidth}
        \includegraphics[scale=0.26]{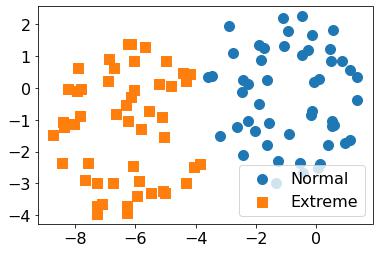}
        \caption{Ele-load}
        
    \end{subfigure}
    \hfill
    \begin{subfigure}[b]{0.38\columnwidth}
        \includegraphics[scale=0.26]{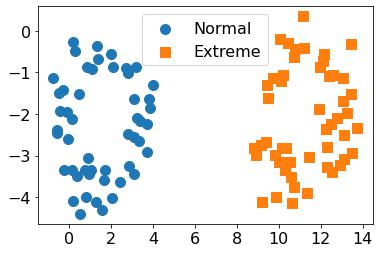}
        \caption{Water Level}
        
    \end{subfigure}
    
\caption{Embedding visualization. The blue circles and orange squares represent normal and extreme samples, respectively.}

\label{fig:embed_vis}
\end{figure*}
\begin{figure*}[ht!]
\centering
    \begin{subfigure}[b]{0.36\columnwidth}
        \includegraphics[scale=0.24]{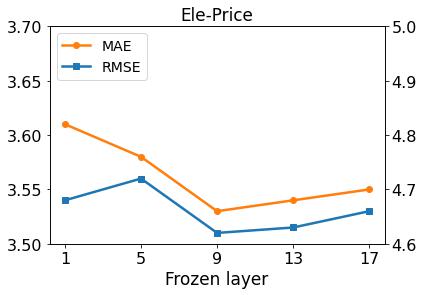}
    \end{subfigure}
    \hfill
    \begin{subfigure}[b]{0.36\columnwidth}
        \includegraphics[scale=0.24]{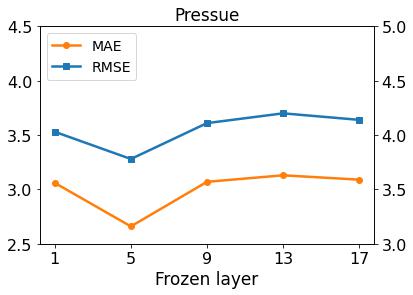}
    \end{subfigure}
    \hfill
    \begin{subfigure}[b]{0.35\columnwidth}
        \centering
        \includegraphics[scale=0.24]{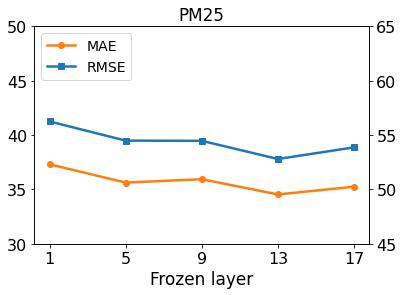}
    \end{subfigure}
    \hfill
    \begin{subfigure}[b]{0.38\columnwidth}
        \includegraphics[scale=0.24]{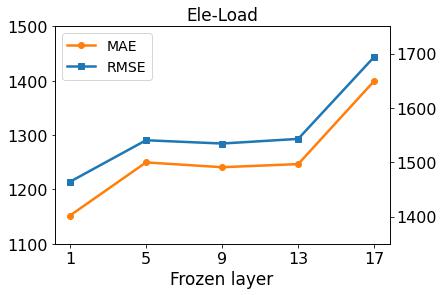}
    \end{subfigure}
    \hfill
    \begin{subfigure}[b]{0.37\columnwidth}
        \includegraphics[scale=0.24]{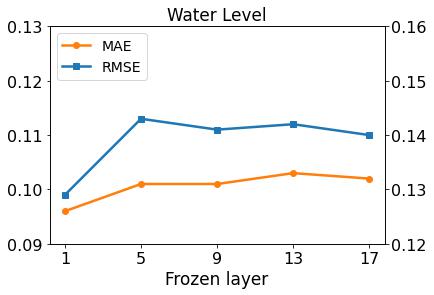}
    \end{subfigure}
    
\caption{Hyperparameter-tuning of the frozen layer. The left and right y-axis describe the MAEs and RMSEs, accordingly.}

\label{fig:hypers}
\end{figure*}

\subsection{Embedding Visualization}
To assess the effectiveness of reweighting in differentiating between extreme and normal samples, we employ t-distributed stochastic neighbor embedding (t-SNE) \cite{van2008visualizing} for a visual representation of the sample embedding. 
t-SNE is a dimensionality reduction technique that visualizes high-dimensional data in a lower-dimensional space \cite{liu2024timex}.
In Figure \ref{fig:embed_vis}, we randomly pick 50 extreme samples and 50 normal samples and visualize their embedding extracted from the last hidden layer. 
This visualization reveals a clear separation between normal and extreme samples in the embedding space, with samples of the same type tending to cluster together.

\subsection{Hyper-parameter Tuning}
As described in Sections \ref{sec:finetune} and \ref{sec:exp}, in the fine-tuning process, we freeze a subset of the lower layers and keep the remaining layers trainable with the $L2$ regularization.
We show the primary hyperparameter adjustments, (i.e., the number of frozen layers) in Figure \ref{fig:hypers}. 
We can observe that fine-tuning with different trainable parameters has varying effects. 
The selection of optimal hyperparameters requires a meticulous process of experimentation across datasets.

\subsection{Case study with model explanability}
We conduct a case study using the water level dataset to predict water levels by considering other covariates (e.g., precipitation). 
The normal and extreme training samples are separated by 95$^{th}$ percentile of the covariate (precipitation rate) in the data set. 
Figure \ref{fig:water_explain} shows that our reweighting and fine-tuning methods paid greater attention to extreme precipitation events. Note that this did not occur in the original unweighted model, which seemed to paint most attention values with a more uniform brush. 
\begin{figure}[ht]
\centering
    \includegraphics[width=0.8\columnwidth]{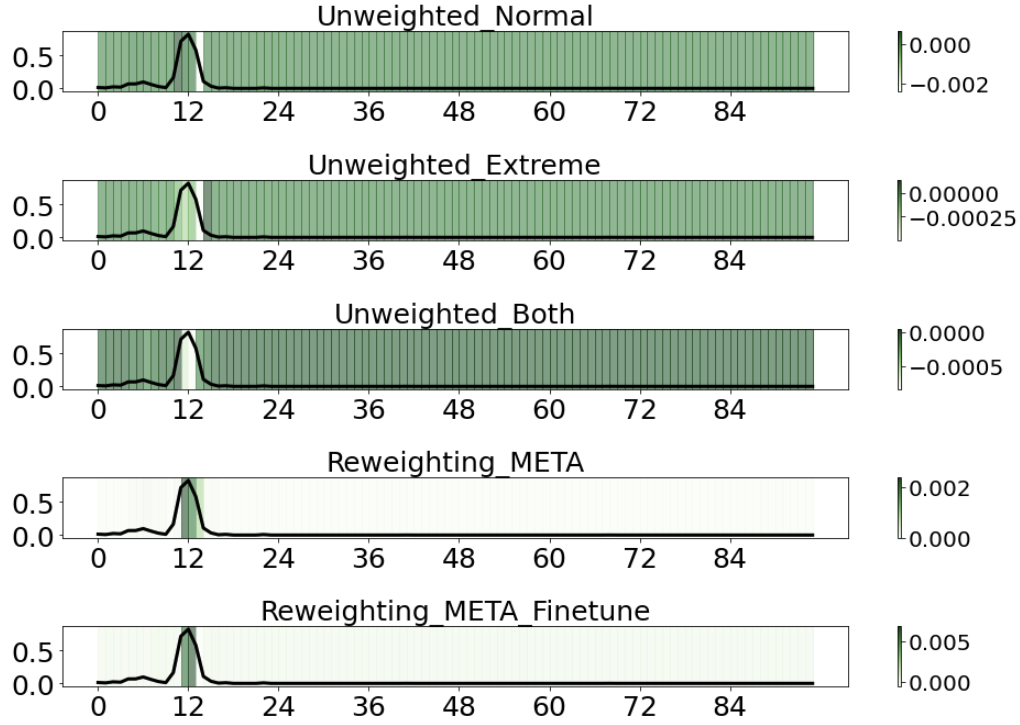} 
\caption{Precipitation explainability using LIME \cite{ribeiro2016should,shi2023power}.}
\label{fig:water_explain}
\end{figure}
\section{Discussion and Conclusions}
In this work, we tackle the challenge of predicting extreme events in time series. 
We introduce a reweighting technique as an initial solution, which is subsequently complemented by fine-tuning to further enhance performance. 
All three \emph{reweighting} methods prove effective. Meta-learning-based reweighting surpasses the other two heuristic methods, confirming the significant advantages of seeking optimal weights in an automated manner.
Our results show that models trained exclusively on normal or extreme samples are doomed by their distribution, demonstrating both normal and extreme samples are needed along with effective reweighting to establish foundational knowledge and get good performance.
\emph{Fine-tuning} can further boost the performance of two heuristic reweighting methods but is less effective sometimes. 
It is also worth noting that while the enhancement achieved through fine-tuning may appear modest, it holds significant value as it serves to \textbf{further} augment the already effective reweighting methods proposed in our work.
Last but not least, by using explainability techniques, we also demonstrate that the \emph{reweighting} and \emph{fine-tuning} approaches have achieved the task of paying prioritized attention to extreme events of input data, which is an important application in practice.

\section*{Acknowledgment}
This work is primarily supported by I-GUIDE, an Institute funded by the National Science Foundation, under award number 2118329.

% \newpage
\bibliographystyle{plain}
\bibliography{reference}

\end{document}